%
\documentclass[12pt]{article}
\usepackage{graphicx}
\usepackage{amsmath,amssymb}
\usepackage{cite}
\usepackage[margin=1.0in]{geometry}

\usepackage[inline,shortlabels]{enumitem} 
\setlist{noitemsep}
\setlist[enumerate]{ labelsep=.25pc, leftmargin=1.5pc } 
\setlist[enumerate,1]{ label= (\arabic*), ref=\arabic*}
\setlist[enumerate,2]{ label= (\roman*),ref  = \roman*}
\setlist[enumerate,3]{ label= (\alph*), ref  = (\alph*)}
\setlist[itemize]{ leftmargin=1.5pc }
\setlist[description]{ font=\sffamily\bfseries }

\usepackage{multirow}
\usepackage{url}
\usepackage{booktabs}
\usepackage{bm}
\usepackage{algorithm}
\usepackage[noend]{algpseudocode}
\renewcommand{\algorithmicrequire}{\textbf{Input:}}
\renewcommand{\algorithmicensure}{\textbf{Output:}}
\DeclareMathOperator*{\argmin}{\operatorname{arg\, min}}

\newcommand{\set}[1]{\{ #1 \}}
\newcommand{\ind}[1]{\mathbb{I}\left[ #1 \right]}

\usepackage{cleveref}
\crefname{algorithm}{Algorithm}{Algorithms}
\crefname{table}{Table}{Tables}
\crefname{figure}{Fig.}{Figures}
\crefname{section}{Sec.}{Sections}
\crefname{chapter}{Chapter}{Chapters}
\crefname{problem}{Problem}{Problems}
\crefname{proposition}{Proposition}{Propositions}

\newcommand{\id}[1]{\ensuremath{\mathit{#1}}} 
\newcommand{\idrm}[1]{\ensuremath{\mathrm{#1}}}

\newcommand{\err}{\sharp\idrm{Err}}

\normalfont 

\newcommand{\op}[1]{\texttt{#1}} 
\newcommand{\alg}[1]{\mbox{\sf #1}}

\newcommand{\kw}[1]{\textbf{#1}}



\renewcommand{\vec}[1]{\mathbf{#1}}



\newcommand{\sete}[1]{\{\kern0.00em#1\kern0.00em\}}
\newcommand{\tup}[1]{(\kern0.05em#1\kern0.05em)}





\newcommand{\indicate}[1]{\mathbb{I}\kern-0.1em\left[\kern0.1em{#1}\kern0.1em\right]} 
\newcommand{\eps}{\varepsilon}
\newcommand{\sig}[1]{{\cal #1}}

\newcommand{\indicator}[1]{\textrm{1\kern-0.375em 1}\left[#1\right]}


\newcommand{\name}[1]{{\em #1}}

\newcommand{\textref}[1]{\textit{#1}}
\newcommand{\algoref}[1]{\textref{Algorithm~\ref{#1}}}
 %
 %
 %
 %
 %

\algrenewcommand\algorithmicrequire{\textbf{Input:}}
\algrenewcommand\algorithmicensure{\textbf{Output:}}
\makeatletter
\renewcommand{\ALG@name}{Algorithm\hspace{-0.2em}}
\makeatother
\newcommand{\Head}{\Statex\hspace{-1.0\leftmargin}}

\newcommand{\Proc}{\Head\kw{Procedure}\hspace{0.2em}}



\newtheorem{theorem}{Theorem}

\newtheorem{definition}{Definition} 
\newtheorem{problem}{Problem} 

\def\lowerbox{\raise-.12ex\hbox{$\Box$}}
\newcommand{\qed}{\hfill{\lowerbox}\vspace{0.8\topsep}\par}
\newenvironment{proof}{
  \vspace{-0.5\topsep}\noindent{\textit{Proof}: \rule{.15em}{0mm}}
}{\qed } 

\newsavebox{\cmbox}
 



\begin{document}
\title{
Computing the Collection of Good Models for Rule Lists
}
%
\author{
Kota Mata\thanks{Presently working for NTT Communications Co.
(e-mail: \texttt{k.mata@ntt.com})
} \and
Kentaro Kanamori\thanks{Presently working for Fujitsu Ltd. 
(e-mail: \texttt{k.kanamori@fujitsu.com})} \and
Hiroki Arimura\thanks{
Graduate School of IST, Hokkaido University, Sapporo 060-0814, Japan. 
  (e-mail: \texttt{arim@ist.hokudai.ac.jp})
} 
}
\date{April 24, 2022}
%
\maketitle              
\begin{abstract}
Since the seminal paper by Breiman in 2001, who pointed out a potential harm of prediction multiplicities from the view of explainable AI, global analysis of a collection of \name{all good models}, also known as a ``\name{Rashomon set},'' has been attracted much attention for the last years. Since finding such a set of good models is a hard computational problem, there have been only a few algorithms for the problem so far, most of which are either approximate or incomplete. 
To overcome this difficulty, we study efficient enumeration of all good models for a subclass of interpretable models, called rule lists. 
Based on a state-of-the-art  optimal rule list learner, \text{CORELS}, proposed by Angelino \textit{et al.} in 2017, we present an efficient enumeration algorithm \textit{CorelsEnum} for exactly computing a set of all good models using polynomial space in input size, 
given a dataset and a error tolerance from an optimal model. 
By experiments with the COMPAS dataset on recidivism prediction, our algorithm \textit{CorelsEnum} successfully enumerated all of several tens of thousands of good rule lists of length at most $\ell = 3$ in around 1,000 seconds, while a state-of-the-art top-$K$ rule list learner based on Lawler's method combined with \text{CORELS}, proposed by Hara and Ishihata in 2018, found  only 40 models until the timeout of 6,000 seconds. 
For global analysis, we conducted experiments for characterizing the Rashomon set, and observed large diversity of models in predictive multiplicity and fairness of models. 

\end{abstract}


\section{Introduction}
In applications of machine learning models to critical decision-making tasks, such as judicial decisions and loan approvals, there have been increasing concerns about the \emph{interpretability} of the models~\cite{Guidotti:CSUR,Rudin:NMI2019}. 
If the decisions based on their predictions might have a significant impact on individuals, decision-makers must provide the reason of the decisions to assure users of their correctness~\cite{Rudin:NMI2019}. 
Consequently, learning interpretable models, such as decision trees, rule sets, and rule lists, has attracted considerable attention in recent years~\cite{hastie2001eslbook,Guidotti:CSUR,angelino:rudin:kdd2017corels,hara:ishihata:aaai2018rulemodels}. 
Because these models are expressed as
combinations of 
simple ``if-then" rules 
as shown in \cref{table:rulelists}, 
it is easy for humans to understand and validate how the models make predictions~\cite{Guidotti:CSUR}. 

Recently, for interpretable models, there has been another concern about the situation where there exist multiple models that are approximately equally accurate by relying on different features~\cite{semenova:rudin:parr:arxiv2019rashomon,Marx:ICML2020,Hancox-Li:FAT*2020}. 
In the seminal 
paper, Breiman~\cite{breiman:statsci2001twocultures} has named such phenomenon ``\emph{Rashomon effect}". 
By showing examples of feature importance, he 
explained 
how different models with similar accuracy can generate different explanations for prediction tasks. From this view, he argues that it is unreliable to use explanations derived from a single predictive model for the class of interpretable models such as decision trees and rule lists. 
By introducing the notion of \emph{prediction multiplicities}, Marx {\em et al.} ~\cite{Marx:ICML2020} showed how a prediction problem can show multiplicities, and how we can measure the diversity of a set of good models. 

For example, we show in \cref{table:rulelists} a part of  results of experiments in \cref{sec:exp} on the COMPAS dataset~\cite{compas:2016} for the task of predicting two-year recidivism. 
The table contains a pair of competing rule lists which was found by our algorithm \alg{CorelsEnum}, associated with the values of the accuracy ($\mathrm{Acc}$), and two major discrimination measures, namely, \emph{demographic parity} ($\mathrm{DP}$)~\cite{Calders:ICDMWS2009} and \emph{equal opportunity} ($\mathrm{EO}$)~\cite{Hardt:NIPS2016} (see \cref{sec:prob:scores}).
Although two rule lists have similar accuracies of 62.5\% and 60.9\%, respectively,  they have quite different characteristics in DO and EO. 
Moreover, we observed that there were some rule list which was only 1\% less accurate than an optimal rule list, while it made different predictions on 11\% of training data from the optimal one made. This type of prediction multiplicity is called \emph{discrepancy}~\cite{Marx:ICML2020}, and will be discussed later in \cref{sec:exp:multiplicities}. 



\begin{table}[t]
 \caption{%
 An example of a pair of competing rule lists of length $\ell = 3$ with similar accuracies, 62.5\% and 60.9\%, for predicting  two-year recidivism on the COMPAS dataset. Although two rule lists have similar accuracy (Acc), they have quite different values of discrimination measures, namely, demographic parity (DP) and equal opportunity (EO), whose definitions can be found in \cref{sec:prob:scores}.  They also have a large discrepancy value $0.325$ (the relative Hamming distance between their prediction vectors).
  }
 \label{table:rulelists}
\medskip
\def\tabcolsep{.5em}
\begin{tabular}{cccl}
\hline 
$\mathrm{Acc}$
& $\mathrm{DP}$
& $\mathrm{EO}$
& \multicolumn{1}{c}{Rule list models}
\\
\hline \hline 
0.625
& 0.083
& 0.061
&  \rule[-6mm]{0.0em}{14mm}
$R_1:$ \scalebox{1.0}{
\begin{tabular}{l}\tt 
   if juvenile-felonies$>$0 \& \\ \tt \qquad current-charge-degree=Felony, then Yes\\ \tt 
   else if juvenile-misdemeanors=0 \& \\ \tt \qquad priors$>$3, then Yes\\  \tt 
   else predict No 
\end{tabular}
}
\\ \hline
0.609
& 0.052
& 0.042
& \rule[-6mm]{0.0em}{14mm}
$R_2:$
\begin{tabular}{l} \tt 
   if sex=Male \& juvenile-crimes$>$0, then Yes\\ \tt 
   else if age=18-20 \& priors=0, then Yes\\ \tt 
   else predict No
\end{tabular}
\\
\hline 
\end{tabular}
\end{table}

The central notion in the studies mentioned above is the \name{collection of good models} within a given model class $\mathcal{H}$ 
that have similar accuracy as an optimal model on a given dataset, which is also called a ``\emph{Rashomon set}", and  has been discussed by several authors~\cite{Marx:ICML2020,semenova:rudin:parr:arxiv2019rashomon}. 
Here, we assume to measure the goodness of a model $h$ by the \emph{empirical risk} $L(h)$ on dataset $S$, which is the 
proportion of the data that the model makes incorrect predictions.
Then, the notion of Rashomon sets is captured by the following definition, 
due to  
 Fisher, Rudin, and Dominici~\cite{Fisher:JMLR2019}: 
the \name{Rashomon set} with error tolerance $\varepsilon > 0$ is defined as the set $\sig R_\varepsilon$ of all models $h$ whose empirical risk $L(h)$ is at most larger than that of optimal model $h_*$ within tolerance $\varepsilon > 0$, that is, given by: 
\begin{align*}
    \sig R_\varepsilon := \set{h \in \mathcal{H} \mid L(h) \leq L(h^*) + \varepsilon},
\end{align*}
Although all models in $\sig R_\varepsilon$ achieve similar accuracy, they often differ markedly in their predictions for individual inputs and thus may have different properties~\cite{Rudin:NMI2019,Marx:ICML2020,Hancox-Li:FAT*2020}. 
Consequently, characterizing 
the set 
$\sig R_\varepsilon$ plays an important role in validating the reliability of 
$\mathcal{H}$ on a specific prediction problem~\cite{Marx:ICML2020}.

To characterize the Rashomon set $\sig R_\varepsilon$ by existing criteria, one often needs to compute the set $\sig R_\varepsilon$ for a certain model class $\mathcal{H}$ on a given dataset. 
However, 
since $\sig R_\varepsilon$ can contain exponentially many models in the input size, 
exact computation of $\sig R_\varepsilon$ still remains challenging~\cite{Rudin:NMI2019}. 
Although there are only a few existing methods for the task~\cite{semenova:rudin:parr:arxiv2019rashomon,hara:ishihata:aaai2018rulemodels}, 
they can only provide a subset of $\sig R_\varepsilon$ randomizedly or approximately. 
Therefore, no one has exactly computed the Rashomon set $\sig R_\varepsilon$ for the class of interpretable models on real datasets and measured the existing criteria to characterize the set $\sig R_\varepsilon$~\cite{Rudin:NMI2019}.

In this paper, we focus on the class of
\emph{rule lists}~\cite{hara:ishihata:aaai2018rulemodels,angelino:rudin:kdd2017corels}, 
and study an exact computation of all the rule lists in the Rashomon set $\sig R_\varepsilon$. 
For that purpose, we extend \emph{CORELS}~\cite{angelino:rudin:kdd2017corels}, 
which is a state-of-the-art optimal rule list learner, 
and propose an efficient algorithm for exactly computing 
the set $\sig R_\varepsilon$
on a given dataset and 
the best-achievable empirical risk.
Based on $\sig R_\eps$, we then measure the following prediction multiplicity scores~\cite{Marx:ICML2020}:  the \emph{ambiguity} $\alpha_\eps$ is the proportion of data that has at least one model with conflicting prediction from $h_0$, while the \emph{discrepancy} $\delta_\eps$ is the maximum proportion of data that a model can make different prediction from $h_0$ over all good models (see \cref{sec:prob:scores}).

Our contributions are summarized as follows:
\begin{itemize}
    \item
    We propose an exact algorithm \textsf{CorelsEnum} for computing the Rashomon set for the class of rule lists. 
    Based on CORELS~\cite{angelino:rudin:kdd2017corels}, 
    our algorithm can efficiently enumerate all good rule lists with length at most $K$ and within error tolerance $\eps$. 
    Unlike the previous method~\cite{hara:ishihata:aaai2018rulemodels}, \textsf{CorelsEnum} uses only polynomial working space to compute the whole set.
    
    \item
    By experiments on the COMPAS dataset~\cite{compas:2016},with a large value of $\eps = 15\%$, 
    our \textsf{CorelsEnum} successfully 
    computed the Rashomon set $\sig R_\eps$ of 
    around 23,354 all good rule lists of length at most $\ell = 3$ in 1,000 seconds, while the previous one for top-$K$ rule lists, \textsf{CorelsLawler}~\cite{hara:ishihata:aaai2018rulemodels}, listed only top-40 rule lists before the timeout of 6,000 seconds. 
    
    \item 
    Based on the computed Rashomon sets $\sig R_\eps$, 
    we analyzed the diversity of a set of good models 
    in terms of 
    \emph{predictive multiplicity}~\cite{Marx:ICML2020} and \emph{unfairness range}~\cite{Coston:ICML2021,Aivodji:NIPS2021}. 
    We found that 
    the Rashomon set $\sig R_\eps$
    with small error tolerance 
    $\eps = 1\%$ had large prediction multiplicities 
    $\alpha_\eps = 29\%$ and $\delta_\eps = 11\%$. 
    For discrimination scores, we observed 
    a trade-off between the score and the empirical risk, 
    and the existence of a few clusters of good models with similar scores. 
    \end{itemize}


As consequences, our results 
revealed 
that real datasets such as COMPAS could had the large diversity of models that cannot be ignored in explanability. Thus, we need further researches for efficient methods to integrate competitive rules to apply existing model explanation methods. 

\subsection{Related Work}

Rule models, such as decision trees, rule sets, and rule lists,
are popular \name{interpretable models}~\cite{Lakkaraju:KDD2016,angelino:rudin:kdd2017corels,Guidotti:CSUR,Rudin:NMI2019}. 
Among them, \name{rule lists} and their variants~\cite{angelino:rudin:kdd2017corels,Wang:AISTATS2015,hara:ishihata:aaai2018rulemodels} have been widely studied from the view of global optimization. 
Angelino~{\em et al.}~\cite{angelino:rudin:kdd2017corels} proposed an algorithm \alg{CORELS} that finds a single optimal rule list that exactly minimizes the size-penalized empirical risk by branch-and-bound search. 
In this paper, we extended \alg{CORELS} for computing the complete set of all almost-accurate rule lists using enumeration and data mining techniques~\cite{Han:kamber:pei2011dmbook}. 



\emph{Computation of the Rashomon set} $\sig R_\eps$ 
has been attracting increasing attention in recent years~\cite{Rudin:NMI2019} from various perspectives, such as interpretability~\cite{Fisher:JMLR2019,semenova:rudin:parr:arxiv2019rashomon}, predictive multiplicity~\cite{Marx:ICML2020}, and fairness~\cite{Coston:ICML2021,Aivodji:NIPS2021}. 
However, exact computation of $\sig R_\eps$ with a small memory footprint
still remains challenging~\cite{Rudin:NMI2019}. 
Particularly, Semenova {\em et al.}~\cite{semenova:rudin:parr:arxiv2019rashomon} described a procedure for randomly sampling a subset of $\sig R_\eps$ for decision trees of bounded size. 
Hara and Ishihata~\cite{hara:ishihata:aaai2018rulemodels} have proposed an efficient top-$K$ rule list learner, called \alg{CorelsLawler} here, 
based on empirical risk 
using the well-known Lawler's method~\cite{hara:ishihata:aaai2018rulemodels}. 
We remark that neither of the above methods did not achieve as goals exact computation of the whole $\sig R_\eps$ and polynomial working space. In contrast, our algorithm achieved both of these requirements. 




\section{Preliminaries}
\label{sec:prelim}

In this section, we give basic definitions and notation, which will be necessary in the following sections. We also introduce our problem of computing the collection of all good models for a class of models. For the notions that are not found here, please consult appropriate textbooks 
such as~\cite{hastie2001eslbook}. 

\subsection{Notation}

For a predicate $\psi$, $\ind{\psi}$ denotes the indicator of $\psi$; that is, $\ind{\psi}=1$ if $\psi$ is true, and $\ind{\psi}=0$ otherwise.
Throughout this paper, we consider the \emph{binary classification problem} as our prediction problem, 
and assume Boolean features 
as in most studies on learning rule models~\cite{angelino:rudin:kdd2017corels,Lakkaraju:KDD2016}. 
Then, the input and output domains are 
$\mathcal{X} = \set{0,1}^{J}$ 
and $\mathcal{Y} = \set{0, 1}$, respectively, where $J \in \mathbb{N}$ is the number of features.
An \emph{example} is a tuple $({\vec x}, y)$ of an input vector (or an \name{input}) ${\vec x} = (x_1, \dots, x_J) \in \mathcal{X}$ and a prediction label (or a \name{label}) $y \in \mathcal{Y}$, and a \emph{dataset} is a sequence 
$S = \set{({\vec x}_n, y_n)}_{n = 1}^{N}$ 
of $N$ examples, where $S \in (\sig X\times \sig Y)^N$. 
For a given \name{classifier}, or a \name{prediction model}, 
$h \colon \mathcal{X} \to \mathcal{Y}$
and dataset $S$, the \emph{empirical risk} of $h$ is defined as
\begin{math}
  L(h \mid S) := \frac{1}{N} {\sum}_{n=1}^{N} l(y_n, h({\vec x}_n)) \in [0,1],
\end{math}
where $l \colon \mathcal{Y} \times \mathcal{Y} \to \mathbb{R}_{\geq 0}$ is a \emph{loss function} that measures the difference between the prediction $h({\vec x})$ and the true label $y$. 
In this paper, we assume the $0$-$1$ loss $l(y, \hat{y}) = \ind{y \not= \hat{y}}$. 
The \name{number of misclassifications} by $h$ on $S$ is defined as $\err(h \mid S) := \sum_{n=1}^{N} l(y_n, h({\vec x}_n)) \in [0..N]$. 
Note that the empirical risk is given by $L(h \mid S) = \frac{1}{N} \err(h \mid S)$.

\subsection{Rule List}

In this study, we focus on the class of classifiers, called \emph{rule lists}~\cite{angelino:rudin:kdd2017corels,hara:ishihata:aaai2018rulemodels}, defined as follows.
Let $\sig X = \set{0,1}^J$ be an input domain of $J$ Boolean features. 
Let $\sig T$ be 
a set, called a \name{vocabulary}, 
which consists of terms over a set of $J$ \name{Boolean features} $x_1, \dots, x_J$
over $\set{0,1}$.
Each \name{term}  $t$ in $\sig T$ is a conjunction
$t = (x_{i_1}\land \dots \land x_{i_k})$
of Boolean features, and represents a Boolean assertion $t \colon \sig X \to \set{0,1}$ such that $t$ evaluates \name{true} on an input vector
$\vec x \in \sig X$ 
if $x_{i_j}=1$ for all $1\le j\le k$,
and \name{false} otherwise.
For example, $(\text{`age = 18 - 20'}) \land (\text{`sex = Male'})$ is a term used in experiments of \cref{sec:exp}.
As with previous studies~\cite{angelino:rudin:kdd2017corels,Lakkaraju:KDD2016}, we assume that $\sig T$ includes the constant $1$ (true),  and that $\sig T$ is pre-mined by frequent itemset mining algorithms (e.g., FP-growth~\cite{Han:kamber:pei2011dmbook} or LCM~\cite{uno2004lcmver2}) 

Let $\sig Y$ be a set of prediction labels. A \name{rule} over $\sig T$ and $\sig Y$ is a pair $(t \to y)$ of a term $t \in \sig T$ and a label $y \in \sig Y$, which corresponds to the conditional statement ``if $t$, then $y$.''
A \name{rule list} of length $\ell\ge 1$ over $\sig T$ and $\sig Y$ is a tuple 
$d = (r_1, \dots, r_\ell)$
of $\ell$ rules, where
(i) $r_i = (t_i \to y_i)$
is a rule for every $1\le i\le \ell$, and
(ii) the last rule $r_\ell$ always has constant test $t_\ell = 1$, and is called the \name{default rule}.
In \cref{table:rulelists}, we show an example of a rule list.
We denote by $\circ$ the concatenation operation for rule sequences. 
A rule list $d = ((t_i\to y_i))_{i=1}^{\ell}$
naturally defines a \name{prediction model} $h_d \colon \mathcal{X} \to \mathcal{Y}$ such that
given an input $\vec x$ in $\sig X$,
the \name{prediction} $y = h_d(\vec x)$ in $\sig Y$ is computed by the code below: 
\begin{itemize}
\item[]\tt
  if $t_1(\vec x)$ then predict $y_1$,
  else if $t_2(\vec x)$ then predict $y_2$,
  $\dots$, 
  \par else if $t_{\ell-1}(\vec x)$ then predict $y_{\ell-1}$, 
  else predict $y_\ell$. 
\end{itemize}

In the above code, whenever the label $y_i$ is predicted, the condition $t_1(\vec x)=0\land \dots\land t_{i-1}(\vec x)=0$ and $t_i(\vec x)=1$ must hold. Then, we say that $\vec x$ \name{falls into} the $i$-th rule $r_i$. 
For a given dataset 
$S \in (\sig X\times \sig Y)^N$,
regularization 
parameter $\lambda \geq 0$,
and a set $\sig T$ of candidate terms, 
the task of learning a rule list is formulated 
as: 
\begin{align}
  h_{d^*}
  = {\argmin}_{h_d \in \mathcal{H}_{\sig T}} R_{\lambda}(h_d \mid S) := L(h_d \mid S) + \lambda \cdot |d|.
  \label{eq:corels}
\end{align}
Although finding an optimal solution of the problem~\eqref{eq:corels} is a
hard
combinatorial optimization, 
it can be efficiently solved by recent branch-and-bound optimization algorithms such as {CORELS}~\cite{angelino:rudin:kdd2017corels} in many practical instances.

\subsection{Computation of Rashomon Sets}

To characterize the set of good models, the \emph{Rashomon set} has been introduced as a set of models that achieve near-optimal accuracy~\cite{Rudin:NMI2019}. 
For a prediction problem $(\mathcal{X}, \mathcal{Y})$, let $\mathcal{H}$ be a set of classifiers $h \colon \mathcal{X} \to \mathcal{Y}$, which we call a \emph{model class}. 
Following previous studies~\cite{semenova:rudin:parr:arxiv2019rashomon,Marx:ICML2020}, we define the Rashomon set as a subset of classifiers that achieve accuracy close to a given \emph{reference classifier} $h_0 \in \mathcal{H}$ with respect to a certain loss function $l$ and a given error tolerance $\varepsilon \geq 0$. 

\begin{definition}
    Given a model class $\mathcal{H}$, reference classifier $h_0 \in \mathcal{H}$, dataset $S$, and error tolerance $\varepsilon \geq 0$, the \emph{Rashomon set} $\sig R_{\varepsilon}(h_0 \mid S)$ is defined as follows:
    \begin{align*}
        \sig R_{\varepsilon}(h_0 \mid S) := \set{ h \in \mathcal{H} \mid L(h \mid S) \leq L(h_0 \mid S) + \varepsilon}. 
    \end{align*}
\end{definition}

As with existing studies~\cite{semenova:rudin:parr:arxiv2019rashomon,Marx:ICML2020}, we assume the reference classifier $h_0$ to be an optimal rule list $h_{d^*}$ for the learning problem~\eqref{eq:corels}, which can be obtained using {CORELS}~\cite{angelino:rudin:kdd2017corels}. 
Note that the choice of $h_0$ is independent of our results. 
Now, we formally define our problem as follows: 

\begin{problem}\label{prob:rashomonrl}
    Given a dataset $S$, a set of terms $\sig T$, a reference rule list $h_0$, an error tolerance $\varepsilon \geq 0$, and $\ell \geq 0$, compute the Rashomon set $\sig R_{\varepsilon}(h_0 \mid S)$
    for the class of rule lists of length at most $\ell$. 
\end{problem}

By solving \cref{prob:rashomonrl}, we can obtain the Rashomon set $\sig R_{\varepsilon}(h_0 \mid S)$ of rule lists of length $\le \ell$, and can analyze the properties of $\sig R_{\varepsilon}(h_0 \mid S)$ 
from various perspectives described in Sec.~4.

\subsection{Optimal Rule List Learner \alg{CORELS}}


Our algorithm is designed based on the recent branch-and-bound optimization algorithm \alg{CORELS} for learning a single optimal rule list, proposed by~Angelino {\em et al.}~\cite{angelino:rudin:kdd2017corels}. Here, we will briefly review \alg{CORELS}, and discuss how we can extend \alg{CORELS} to exact computation of the Rashomon set. 

The inputs to the \alg{CORELS} algorithm are 
a set $\sig T$ of terms, a set $\sig Y$ of labels, 
a training dataset $S$, 
and numbers $\ell\ge 1$ and $\lambda > 0$. 
Invoked as 
$\alg{Corels}(
\sig T,  
\sig Y, 
\ell, 
\lambda, 
S
)$ with input parameters, 
\alg{CORELS} 
finds an optimal rule list $d_*$ with length $\le \ell$ 
that minimizes the objective $R(d_*)$ in \cref{eq:corels}
by traversing the hypothesis space of prefixes of rule lists as follows. 
For every $1\le k\le \ell$, let $d_{k} := r_1 \circ \dots \circ r_k$ is called a $k$-\name{prefix}, 
where $r_i = (t_i \to y_i)$ and $\circ$ is the concatenation. 
Then, \alg{CORELS} starts with the empty prefix $()$ and 
by recursively expanding the current $(k-1)$-prefix $\pi$ to $k$-prefix 
$\pi' = \pi\circ r_k$, $0\le k\le \ell$, by appending a new rule
$r_k\in \sig T\times\sig Y$. 
The \alg{CORELS} algorithm employs sophisticated pruning strategies using constraints such as maximum rule length $L$,
and the estimate of a lower bound of the objective. %

If 
$M_\textrm{corels} \le |\sig Y|^{\ell}|\sig T|^{\ell-1}$ 
is the number of caldidate prefixes for \alg{CORELS} to visit, 
\alg{CORELS} runs in 
$t_\textrm{corels} = O(M_\textrm{corels} |S|)$ 
time and 
$s_\textrm{corels} = O(L + |S| + |\sig T|)$ 
space in the worst case using stack of length at most $L$. 


  
  
  


\section{Methods for finding good models}
\label{sec:algo}


In this section, we study efficient methods for finding a set of good models on a given training dataset.
Firstly, in Sec.~\ref{sec:algo:lawler:with:corels}, we briefly review an existing algorithm, referred to as \alg{CorelsLawler} in this paper, for Top-$K$ enumeration of good rule lists using \alg{CORELS} algorithm as a black-box function, proposed by Hara and Ishihata~\cite{hara:ishihata:aaai2018rulemodels}. 
Next, in Sec.~\ref{sec:algo:proposed}, we propose  our 
algorithm \alg{CorelsEnum} that efficiently enumerate all the rule lists of length $\le K$ in the Rashomon set on a given dataset. 
    

\subsection{Lawler's method combined with \alg{Corels} algorithm}
\label{sec:algo:lawler:with:corels}

Lawler's method~\cite{Lawler:MS1972} is a well-known framework for top-$K$ enumeration using a black-box optimization function. In \cref{algo:lawler:kbest}, we show the pseudo-code for Hara and Ishihata's algorithm~\cite{hara:ishihata:aaai2018rulemodels}, called \alg{CorelsLawler} here, for finding top-$K$ rule lists using Lowler's method. This algorithm iteratively calls \alg{CORELS}~\cite{angelino:rudin:kdd2017corels}, to find one of the optimal rule lists within the subspace of hypothesis. During the search, It removes some terms appearing in a discovered rule list $Rule$ from $\sig T$ to efficiently search the hypothesis space of good models, where $Rule.\op{Terms}()$ is the set of terms appearing in $Rule$. 

If $K$ is the number of good models to output, we can show that the time and space complexity of \alg{CorelsLawler} is
at most $t_\mathrm{lawler} = O( t_{corels}\cdot K \ell)$ time and $s_\mathrm{lawler} = O(s_\mathrm{corels} + K \ell)$ space. 
A major disadvantage of \alg{CorelsLawler} is its exponential space complexity 
since it must keep the set $\sig F$ of all terms found so far for the membership test at Line~7. 
Since $|\sig F| \le K \le M_{\rm corels} \le |\sig T|^{\ell-1}|\sig Y|^{\ell}$, $|\sig F|$ becomes exponential in $\ell$ in the worst case. 



\begin{algorithm}[t]
  \caption{
    Lawler's method with CORELS for finding Top-$K$ rule lists with respect to prediction error (score). 
  }\label{algo:lawler:kbest}
\begin{algorithmic}[1]
\Require A set $\sig T$ of all terms, a label set $\sig T$, $\ell \ge 0$, $\lambda > 0$, and a dataset $S$. 
\Ensure A list $Answers$ of top-$K$ rule lists in prediction error.
\Proc \alg{CorelsLawler}
\State $Answers \gets \emptyset$
\State 
$(score, Rule) \gets \alg{Corels}(\sig T, \sig Y, \ell, \lambda, S)$

\State $Queue \gets \set{ (score, (Rule, \sig T, \emptyset)) }$
\Comment{A priority queue of $(Rule, T, F)$ with $score$ as key, where $Rule$ is a rule set, $T$ and $F$ are include and exclude sets of features.}
\While{$Queue \not= \emptyset$ and $|Answers| < K$}
  \State $(score, (Rule, T, F)) \gets Queue.\op{deletemin}()$
  \Comment{An entry with minimum $score$}
  \State $\id{Terms} \gets Rule.\op{Terms}()$
  \If{$\id{Terms} \not\in \sig F$}
    \Comment{$\id{Terms}$ is the set of all terms used in Rule}
    \State $Answers \gets Answers \cup \set{ (score, Rule) }$
    \State $\sig F \gets \sig F\cup\set{ \id{Terms} }$
  \EndIf
  \For{each $f \in \id{Terms}$}
    \State $(score', Rule') \gets \alg{Corels}((T\!\setminus\!\set{ f }),  \sig Y, \ell, \lambda, S)$
    \State $Queue \gets Queue \cup\set{ (score', (Rule', (T\!\setminus\!\set{f}), F\cup\set{ f }) }$
  \EndFor
\EndWhile
\State \textbf{return} $Answers$
\end{algorithmic}
\end{algorithm}

\subsection{The Proposed Algorithm \alg{CorelsEnum}}
\label{sec:algo:proposed}

By extending CORELS, 
we devised our algorithm $\alg{CorelsEnum}$ for computing the Rashomon set of rule lists in polynomial space in $\ell$ and other inputs. 
In \algoref{algo:reccorel:basic}, we show the pseudo-code of the $\alg{CorelsEnum}$ algorithm. 
Given a vocabulary $\sig T$, a label set $\sig Y$, the maximum length parameter $\ell\ge 0$, a dataset $S$ of $N$ example, and the empirical risk $L(h_{d_0} \mid S)$ of a reference rule list $d_0$, 
$\alg{CorelsEnum}$ traverses the space of rule lists in depth-first manner from a shorter prefix to longer one,  starting from the empty prefix $()$. 

At each iteration with a candidate prefix $dp = (r_1, \dots, r_k)$, $0\le k\le \ell$, the algorithm either builds a rule list $d$ from the current prefix $dp$, or makes branching with children $dp' = dp \circ (t \to y)$ for all possible combinations of a term $t$ in $\sig T$ and a label $y$ in $\sig Y$. 


Invoked with as arguments 
$dp=()$, $k$, $L_{*} = L(h_0 \mid S) + \varepsilon$, $\sig T$, $\sig Y$, $\ell$, and $S$, 
the recursive procedure \alg{CorelsEnum} computes the Rashomon set of all rule lists with length $\le \ell$
on a dataset $S$ at each iteration as follows: 
\begin{itemize}[$\bullet$]
\item Receive the current candidate prefix $dp$ of length $0\le k\le \ell$ over $\sig T$. 
  
\item For each label $y$ in $\sig Y$, test if the rule list $d = dp \circ (1\to y)$ and its empirical risk $L = L(h_d \mid S)$ satisfies that $L < L_*$. If the test succeeds, output the pair $(d, L)$ as a solution. 
  
\item For each $t \in \sig T$ and $y \in \sig Y$, do:
  First, generate the child prefix $dp' = dp\circ (t\to y)$  of length $k+1$ from $dp$ by appending a new rule $(t\to y)$, make a recursive call with $dp'$, and updating $\sig T'$ by removing $t$ to avoid duplicates. 
\end{itemize}


\begin{algorithm}[t]
  \caption{
    A basic algorithm $\alg{CorelsEnum}$ for computing the Rashomon set $\sig R_\eps(h_0 \mid S)$ consisting of all rule lists $h_d$ with length $\le \ell$ 
    such that $L(h_d \mid S) \le L(h_{d_0} \mid S) + \eps$,
    with respect to a reference rule list $h_{d_0}$. 
  }\label{algo:reccorel:basic}
\begin{algorithmic}[1]
\Proc $\alg{CorelsEnum}(dp, k, L_*, \sig T, \sig Y, \ell, S)$:
\Require
    A candidate prefix $dp = (r_1, \dots, r_k)$, 
    its length $k\ge 0$,
    a non-empty set of terms $\sig T$,
    a label set $\sig Y$, 
    $\ell \ge 0$, $\lambda > 0$, $L_{*} \in [0,1]$, 
    and a dataset $S\in (\sig X\times \sig Y)^N$. 
\Ensure The subset of $\sig R_\eps(h_0 \mid S)$ consisting of all rule lists 
with prefix $dp$. 

  \For{label $y \in \sig Y$}
    \Comment{Step 1: Processing a rule list $d$ with default label $y$}
    \State $d \gets (dp \circ (1 \to y))$; 
    $L \gets L(h_d \mid S)$ 
    \If{$L \le L_*$ }
      \State \kw{Output} $(d, L)$ as a solution
      \Comment{A solution is found}
    \EndIf
  \EndFor
  \State \kw{if}{$k \ge \ell$} \kw{then return}
\Comment{Pruning by the maximum length}  
  \For{term $t \in \sig T$}
    \Comment{Step 2: Generating children of a parent prefix $dp$}
    \For{label $y \in \sig Y$}
      \State $dp' \gets\: (dp\circ (t \to y))$
      \If{$LB(dp', S) \le L_*$}
      \Comment{Pruning by a lowerbound of $L$}
        \label{line:algo:one:prune:lb}      
        \State $\alg{CorelsEnum}(dp', k+1, L_*, \sig T\setminus\set{t}, \sig Y, \ell, S)$
        \Comment{Recursive call}
      \EndIf
    \EndFor
  \EndFor
  \State \kw{return} 
\end{algorithmic}
\end{algorithm}

In our algorithm, we employ some pruning techniques of \alg{CORELS} in a similar way to prune search of unnecessary subspaces as follows, where we attach comments to the corresponding part of \algoref{algo:reccorel:basic}: 
\vspace{-0.5\baselineskip}
\begin{enumerate}[(1)]
\item \name{Pruning based on minimum support}: it asserts that each rule must capture enough number of examples for the reliability of prediction.

\item \name{Pruning based on estimated lower bounds}: 
When invoking recursive call for a child $dp'$, if the lower bound function $LB$ does not satisfy $LB(dp', S)\le L_*$, prune all computation for $dp'$ and all of its descendants.
We use the \name{lower bound function} $LB(d, S)$ that is same to the empirical risk $L(h_d \mid S)$ except that all data that fall in the default rule are ignored~\cite{angelino:rudin:kdd2017corels} as in CORELS.

\item \name{Pruning based on symmetry}: 
If a range of consecutive rules $r_i, r_{i+1}, \dots, r_j$ in a rule list $d$, $1\le i< j \le k$, have the same labels $y_i = y_{i+1} = \dots = y_j$, any permutation of them does not change the prediction by $h_d$. Thus, we can keep some $r_\sigma$, $i\le \sigma\le j$, and discard the rest of them. 
\end{enumerate}


In spite of the inherent difference between 
\alg{CORELS}
with branch-and-bound search and \alg{CorelsEnum} with exhaustive search for all good rule lists, 
the above 
strategies (1)--(3) effectively prune the unnecessary subspaces of candidates. 
Let $M_\textrm{enum} \le |\sig Y|^{\ell}|\sig T|^{\ell-1}$ be the number of caldidate prefixes for \alg{CorelsEnum} to visit. We  show the following theorem.

\begin{theorem}\label{thm:corels:enum}
  \alg{CorelsEnum} of \cref{algo:reccorel:basic} enumerates all good rules with length $\le \ell$ on a data set $S \in (\sig X\times \sig Y)^N$ in 
  $t_\mathrm{enum} = O(M_\mathrm{enum} |S| )$ 
  time and 
  $s_\mathrm{enum} = O(|S| + |\sig T| + \ell^2)$ 
  space. 
\end{theorem}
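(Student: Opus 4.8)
The plan is to establish correctness and the two resource bounds by induction on the depth-first recursion tree explored by \alg{CorelsEnum}. For correctness I would prove the invariant that a call $\alg{CorelsEnum}(dp,k,L_*,\sig T',\sig Y,\ell,S)$ outputs exactly the pairs $(d,L(h_d\mid S))$ such that $d$ is a rule list of length $\le\ell$ whose non-default rules begin with the rules of $dp$, use each term at most once, and draw their remaining terms from $\sig T'$, and such that $L(h_d\mid S)\le L_*$ --- with good rule lists counted up to the equivalence that permutes maximal runs of rules sharing a label. \emph{Completeness:} Step~1 forms $dp\circ(1\to y)$ for every $y\in\sig Y$ and emits it precisely when its risk is $\le L_*$, so every good rule list with prefix $dp$ is produced there; a longer good rule list factors as a child prefix $dp'=dp\circ(t\to y)$ extended further, and the recursion reaches $dp'$ unless a pruning rule fires. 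The soundness of the three pruning rules is the heart of the argument: pruning by maximum length is harmless since we only want length $\le\ell$; pruning by the lower bound needs the monotonicity property, inherited from \alg{CORELS}~\cite{angelino:rudin:kdd2017corels}, that $LB(dp',S)\le L(h_d\mid S)$ for every completion $d$ of $dp'$ --- true because $LB(dp',S)$ is the empirical risk restricted to the examples captured by the non-default rules of $dp'$, and those examples receive the same fixed predictions in every completion, hence contribute at least $LB(dp',S)$ to $L(h_d\mid S)$; and the symmetry pruning is sound because reordering a run of equal-label rules does not change the induced model $h_d$, so retaining one representative loses no model of $\sig R_\eps(h_0\mid S)$. \emph{Soundness} (no spurious output) is immediate from the explicit test $L\le L_*$ in Step~1, and each good rule list is output only once because deleting the just-used term from $\sig T$ forbids term repetitions and forces every visited prefix to be reached along a unique sequence of appends.

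For the time bound I would bound the number of prefixes ever constructed --- i.e. the nodes of the recursion tree, including children that are generated and then rejected by the $LB$ test --- by $M_\mathrm{enum}\le|\sig Y|^{\ell}|\sig T|^{\ell-1}$, which is how $M_\mathrm{enum}$ is defined. The work charged to one such prefix is $O(|S|)$: in Step~1 the set $\sig Y$ has constant size and each $L(h_d\mid S)$ is one linear scan of $S$, while constructing a child $dp'$ and evaluating $LB(dp',S)$ is again one scan of $S$, provided the algorithm carries down the recursion stack the bit-vector of examples not yet captured by the current prefix together with the running count of misclassifications among captured examples, updates them in $O(|S|)$ when a rule is appended, and restores them on backtracking. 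Multiplying gives $t_\mathrm{enum}=O(M_\mathrm{enum}\,|S|)$. For the space bound, the max-length guard caps the recursion depth at $\ell+1$, so $O(\ell)$ frames are live at once; each frame stores only $O(\ell)$ words recording the rule it appended and the increments to $\sig T$ and to the capture vector needed for restoration, so the stack costs $O(\ell^2)$; the dataset with the single shared capture bit-vector and counter costs $O(|S|)$, and the current term set with its undo log costs $O(|\sig T|)$, for a total of $s_\mathrm{enum}=O(|S|+|\sig T|+\ell^2)$.

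I expect the main obstacle to be the soundness of the pruning rules --- in particular pinning down the monotonicity of $LB$ over all completions and reconciling the symmetry reduction with the precise definition of the target set --- together with the bookkeeping argument that keeps the per-node work at $O(|S|)$, since it requires showing the incremental capture state can be both advanced when descending and correctly restored when backtracking.
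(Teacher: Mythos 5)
Your proposal is correct and follows essentially the same route as the paper's own proof, which is only a two-sentence sketch: $O(|S|)$ work per visited prefix times $M_\mathrm{enum}$ prefixes for time, and a recursion stack of depth at most $\ell$ holding rule lists of length at most $\ell$ (hence $O(\ell^2)$) plus the dataset and vocabulary for space. You supply considerably more detail than the paper does --- in particular the soundness of the $LB$ and symmetry prunings and the incremental capture-state bookkeeping, which the paper omits entirely --- but these are elaborations of the same argument rather than a different approach.
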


\begin{proof}
  The time complexity follows that \alg{CorelsEnum} requires $O(|S|)$ time at each iteration to compute the objectives. The space complexity follows that the algorithm only keep at most $\ell$ rule lists with length $\le \ell$ on any branch of the search tree.  \qed 
\end{proof}

 A major advantage of \alg{CorelsEnum} is that \alg{CorelsEnum} has the polynomial space complexity in all inputs including $\ell$ independent of the number of solutions 
  $K \le |\sig T|^{\ell-1}|\sig Y|^{\ell}$, 
  while \alg{CorelsLawler} requires the space proportional to $K$, which may be exponential in $\ell$ in the worst case. 
  \alg{CorelsEnum} has amortized polynomial delay complexity, that is, it lists candidates in $O(|S|)$ time per candidate. 
  We remark that if the pruning strategy for $\alg{CORELS}$ effectively cuts candidates earlier on an input, it is possible that \alg{CorelsLawler} runs much faster than \alg{CorelsEnum} since the search space of the former is narrower than the latter. 
 







\section{Evaluation Criteria for Characterizing Rashomon Sets}
\label{sec:prob}

In this section, we introduce model criteria for analyzing the Rashomon set from the views of prediction multiplicity~\cite{Marx:ICML2020}, and fairness of prediction.~\cite{Hardt:NIPS2016}.

Several useful criteria have been proposed for characterizing some properties of a certain model class, such as interpretability~\cite{Fisher:JMLR2019}, multiplicity~\cite{Marx:ICML2020}, and fairness~\cite{Coston:ICML2021,Aivodji:NIPS2021}, through the lens of the Rashomon set. In particular, we focus on the \emph{predictive multiplicity} and \emph{unfairness range} described below. 
Note that these criteria can be easily computed once the Rashomon set $\sig R_{\varepsilon}(h_0 \mid S)$ is obtained. 

\subsection{Predictive Multiplicity}
\label{subsec:pred:multiple}

Marx {\em et al.}~\cite{Marx:ICML2020} have introduced the \emph{predictive multiplicity} as the ability of a prediction problem to admit competing models that assign conflicting predictions. 
Given a reference classifier $h_0$,
the predictive multiplicity is exhibited over the Rashomon set $\sig R_{\varepsilon}(h_0 \mid S)$ if there exists a classifier $h \in \sig R_{\varepsilon}(h_0 \mid S)$ such that $h({\vec x}) \neq h_0({\vec x})$ for some ${\vec x}$ in the dataset $S$. 
To measure the predictive multiplicity, \emph{ambiguity} and \emph{discrepancy} have been proposed~\cite{Marx:ICML2020}. 

\subsubsection{Ambiguity.}
Ambiguity represents the number of predictions by the reference classifier $h_0$ that can change over the set of competing classifiers $h \in \sig R_{\varepsilon}(h_0 \mid S)$. 
Formally, the ambiguity $\alpha_\varepsilon(h_0 \mid S)$ is defined by
\begin{align}
  \alpha_\varepsilon(h_0 \mid S)
  & := \frac{1}{N} {\sum}^{N}_{n=1} {\max}_{h \in \sig R_{\varepsilon}(h_0 \mid S)} \ind{h({\vec x}_n) \neq h_0({\vec x}_n)}
    \quad\in [0,1].
\end{align}
The ambiguity $\alpha_\varepsilon(h_0 \mid S)$ reflects the number of individuals ${\vec x}$ who could contest their assigned prediction $h_0({\vec x})$ by the deployed model $h_0$ since their predictions are determined depending on the model choice by the decision-makers~\cite{Marx:ICML2020}. 

\subsubsection{Discrepancy.}
Discrepancy represents the maximum number of predictions that can change if we switch the reference classifier $h_0$ with a competing classifier $h \in \sig R_{\varepsilon}(h_0 \mid S)$. 
Formally, the \emph{discrepancy} $\delta_\varepsilon(h_0 \mid S)$ is defined by
\begin{align}
  \delta_\varepsilon(h_0 \mid S)
  & := {\max}_{h \in \sig R_{\varepsilon}(h_0 \mid S)} \textrm{Dist}_\textrm{Hum}(h, h_0\mid S)
  \quad\in [0,1], 
\end{align}
where
\begin{math}
  \textrm{Dist}_\textrm{Hum}(h, h_0\mid S)
  := \frac{1}{N} {\sum}^{N}_{n=1} \ind{h({\vec x}_n) \not= h_0({\vec x}_n)}
  \in [0,1]
\end{math}
is the \name{normalized Hamming distance} between the vectors of the predictions by $h$ and $h_0$. 
Compared to the ambiguity, the discrepancy $\delta_\varepsilon(h_0 \mid S)$ reflects the number of the conflicting predictions $h({\vec x}_n) \not= h_0({\vec x}_n)$ by a single competing model $h \in \sig R_{\varepsilon}(h_0 \mid S)$~\cite{Marx:ICML2020}.

\subsection{Discrimination Scores and Unfairness Ranges}
\label{sec:prob:scores}
While Coston {\em et al.}~\cite{Coston:ICML2021} have proposed a framework that evaluate the fairness of the classifiers over the Rashomon set, Aïvodji {\em et al.}~\cite{Aivodji:NIPS2021} have pointed out that the Rashomon effect corresponds to the risk of \emph{fairwashing}, which is a malicious attack that rationales unfair complex models by interpretable and fair surrogate models~\cite{Aivodji:NIPS2021}. 
By motivating these studies, we introduce the \emph{unfairness range} to evaluate the fairness over the Rashomon set $\sig R_{\varepsilon}(h_0 \mid S)$. 

Let $z_n \in \set{0,1}$ be a \emph{sensitive attribute} (e.g., gender or race) with respect to the $n$-th example $({\vec x}_n, y_n)$ in a dataset $S$. 
To evaluate the fairness of a classifier $h$ with respect to the sensitive attribute $z$, we focus on \emph{demographic parity (DP)}~\cite{Calders:ICDMWS2009} and \emph{equal opportunity~(EO)}~\cite{Hardt:NIPS2016}, which are major discrimination criteria based on statistical parity.
The DP and EO scores of $h$ on $S$ are defined as:
\begin{math}
    \mathrm{DP}(h \mid S) := \hat{P}(h({\vec x})=1 \mid z=1) - \hat{P}(h({\vec x})=1 \mid z=0),  
    {\mathrm{EO}}(h \mid S) := \hat{P}(h({\vec x})=1 \mid y=1, z=1) - \hat{P}(h({\vec x})=1 \mid y=1, z=0), 
\end{math}
where $\hat{P}$ is the empirical probability over the joint distribution on $y$, $z$, and $h({\vec x})$ of $S$. 

Let $D \in \set{\mathrm{DP}, \mathrm{EO}}$ be any discremination score. 
We introduce the \name{unfairness range} of the Rashomon set $\sig R_{\varepsilon}(h_0 \mid S)$, 
denoted  $\gamma_\varepsilon^\mathit{D}$, 
as an approximation of the distribution of $D$ for the models in $\sig R_{\varepsilon}$. Formally, the unfairness range is the interval 
\begin{math}
    \gamma_\varepsilon^\mathit{D}(h_0 \mid S) := \left[\; {\min}_{h} D(h \mid S), {\max}_{h} D(h \mid S) \;\right]  \subseteq [-1, +1],
\end{math}
where $h$ ranges over $\sig R_{\eps}(h_0 \mid S)$. 
Since we can exactly compute the Rashomon set $\sig R := \sig{R}_\eps(h_0 \mid S)$ in $t_{enum}$ by using \alg{CorelsEnum} proposed in \cref{sec:algo:proposed},  now we can compute the range $ \gamma_\varepsilon^\mathit{D}(h_0 \mid S)$  in linear time in $t_{enum} + |\sig R|$ by scanning~$\sig R$. 





\section{Experiments}\label{sec:exp}
\begin{figure}[t]
    \centering
     \includegraphics[width=0.8\columnwidth]{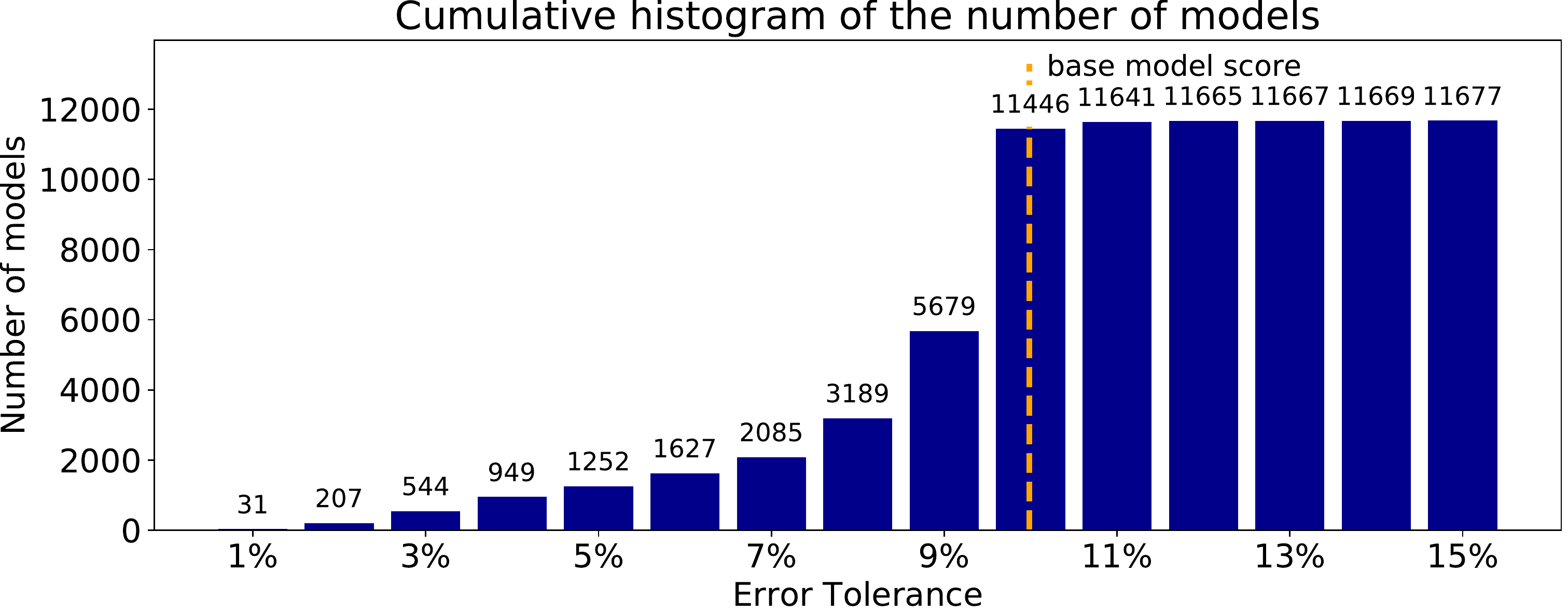}
    \caption{Cumulative histogram of the number of the models in the Rashomon set for each value of the error tolerance $\varepsilon$ from $1\%$ to $15\%$. }
    \label{fig:numberofmodels}
    \vspace{-0.5\baselineskip}
\end{figure}

\begin{table}[t]
 \caption{Results of execution time and the number of models found on COMPAS dataset by the existing method (\alg{CorelsLawler}) and our proposed method (\alg{CorelsEnum}) within around 6,000 seconds. The existing method was stopped at $K=40$ by timeout. }
 \label{table:compare}
 \centering
  \begin{tabular}{l|ccc}
   \hline
    & Existing method & Proposed method \\
   \hline \hline
   Run time (s) & $6021$ & $1058$ \\
   Memory(MB) & 209.3 & 202.4 \\
   Number of models & $40$ & $23354$ \\
   \hline
  \end{tabular}
\end{table}

\begin{figure}[t]
    \centering
     \includegraphics[width=0.9\columnwidth]{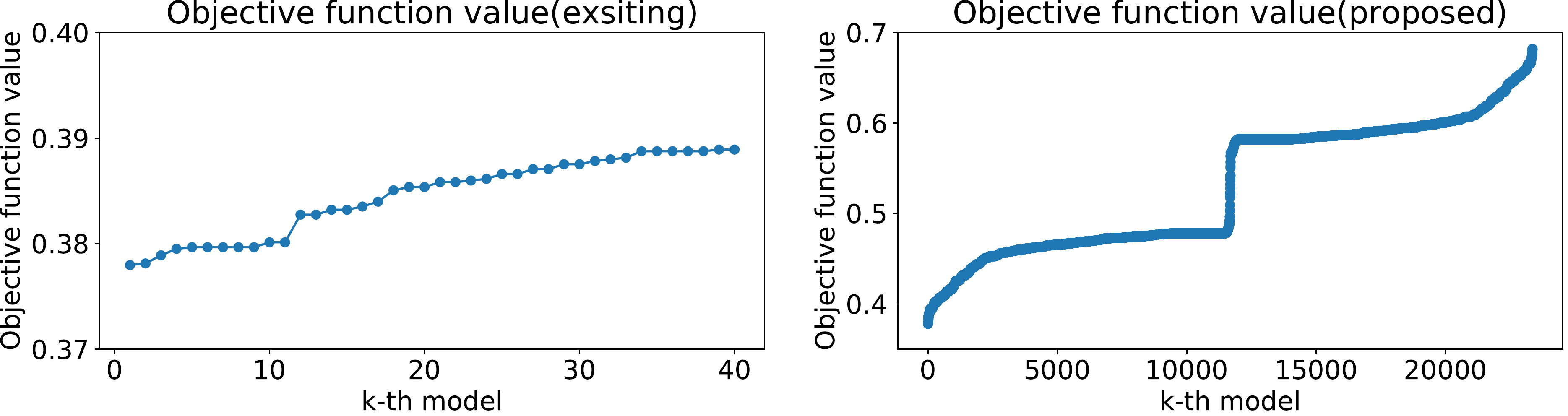}
    \caption{The objective value (training error plus $\lambda$ times the rule list length) against the rank $k$ of a rule list on the COMPAS dataset for existing and proposed methods.}
    \label{fig:compare}
    \vspace{-0.5\baselineskip}
\end{figure}
\begin{figure}[t]
    \centering
     \includegraphics[width=\columnwidth]{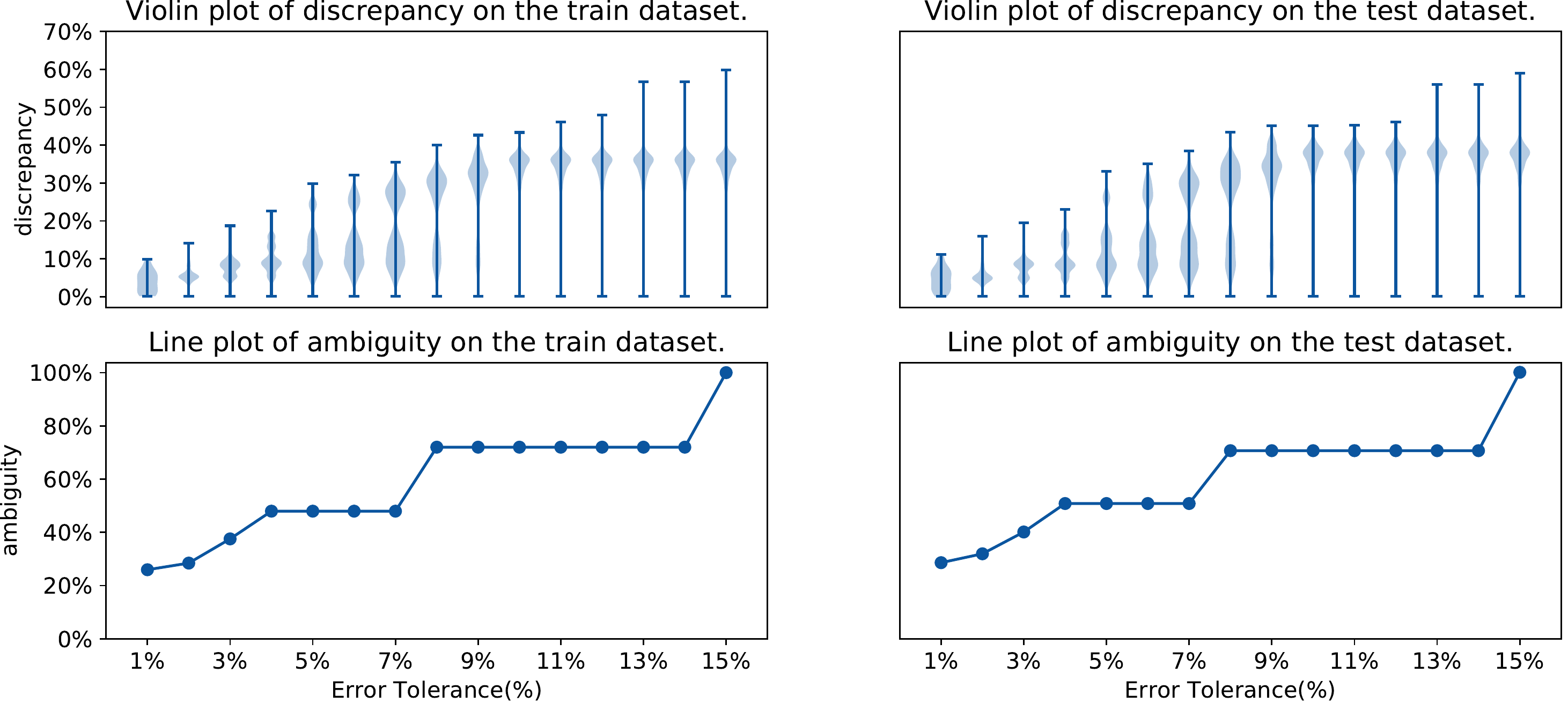}
    \caption{Predictive multiplicity of discovered rule lists in the Rashomon set on the COMPAS dataset. The violin plots (above) show the distribution of discrepancy and the line plots (below) show the ambiguity of rule lists over the Rashomon set.}
    \label{fig:predictivemultiplicity}
    \vspace{-0.5\baselineskip}
\end{figure}

\begin{figure}[t]
    \centering
     \includegraphics[width=0.6\columnwidth]{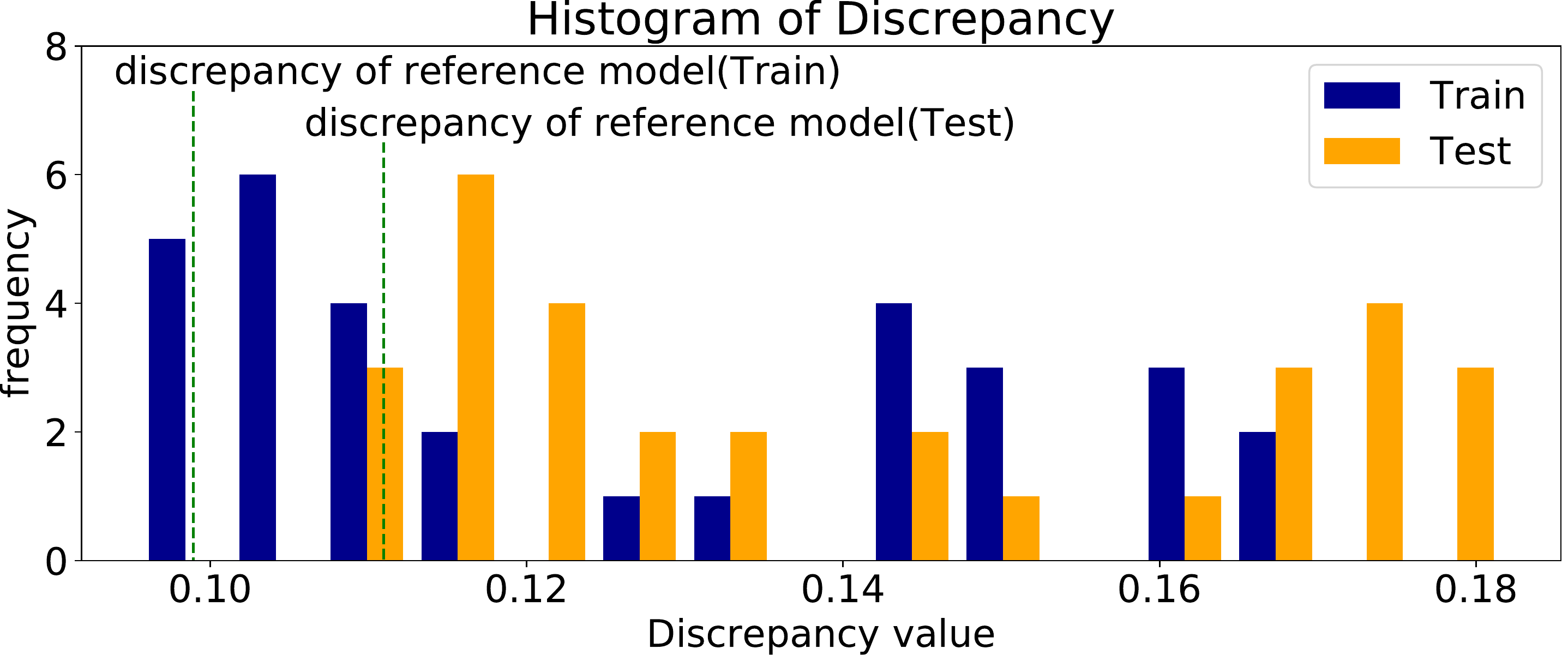}
    \caption{The histograms of the discrepancies $\delta_\eps$ of discovered rule lists in the Rashomon set with error tolerance $\eps = 1\%$ on the COMPAS dataset, where the blue and yellow histograms  show the frequencies in the training and test data sets, respectively. }
    \label{fig:radius}
    \vspace{-0.5\baselineskip}
\end{figure}

\begin{figure}[t]
    \centering
     \includegraphics[width=\columnwidth]{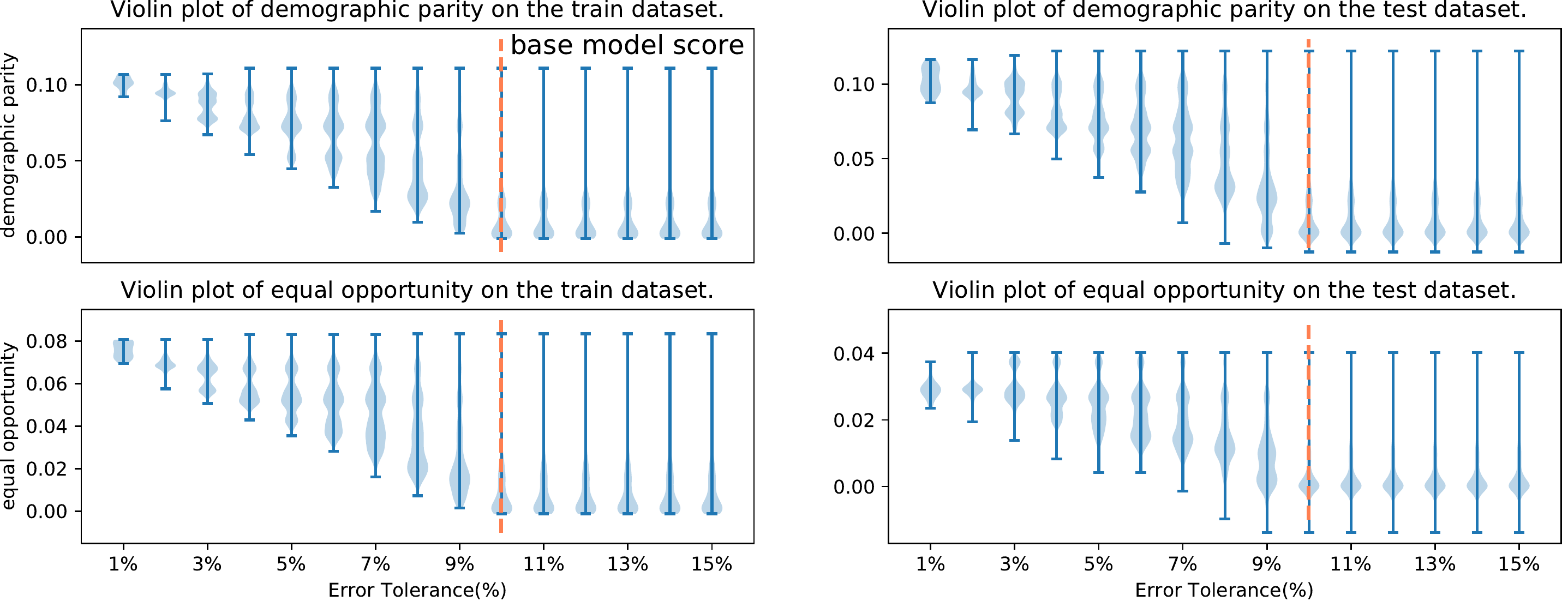}
    \caption{
    Distributions of discrimination scores with respect to demographic parity ($\mathrm{DP}$, upper) and  equal opportunity ($\mathrm{EO}$, lower) on the COMPAS dataset. Here, the violin plots show the frequencies of models with a certain score, while the error bar shows the unfairness range with $\mathrm{DP}$ and $\mathrm{EO}$. The score for the base model is shown in red dashed lines).}
    \label{fig:unfairnessrange}
    \vspace{-0.5\baselineskip}
\end{figure}

In this section, 
we analyze the class of rule lists on the COMPAS dataset~\cite{compas:2016} through the lens of the Rashomon effect using our proposed algorithm.

\subsection{Experimental Setting}

\textbf{Datasets}. We used COMPAS dataset~\cite{compas:2016} 
for the task of criminal decision, 
which comprises $20$ categorical attributes of individual people, relating their criminal history, with a total of $6,489$ training examples ($90\%$) $S$ and $721$ test examples ($10\%$) $S'$. 
The task is binary classification, where the positive category $y = 1$ indicates that the individual recidivates within two years. 
The sensitive attributes $z$ represent the race of the individuals. 
\textbf{Programs}. We implemented \alg{CorelsLawler} and \alg{CorelsEnum} (\cref{sec:algo}) in Python 3.7 with \texttt{numpy} package. 
All the experiments were conducted on 64-bit macOS Big Sur 11.2.3 with Intel Core i9 2.4GHz CPU and 32GB Memory.
We used the 
libraries: \texttt{pandas} for preprocessing, and \texttt{matplotlib}, \texttt{pyplot.violinplot} 
for 
charts. 

\textbf{Setting}. Throughout this paper, we used the following setting for model parameters. A label set is $\sig Y = \set{\id{Yes}, \id{No}}$, and the maximum length of rule lists is $\ell = 3$. 
We used the vocabulary $\sig T\subseteq \sig T_\textrm{corels}$ of 64 terms selected from the set $\sig T_\textrm{corels}$ of all 155 terms in the github repository of \alg{CORELS}~\cite{angelino:rudin:kdd2017corels} so that a term  $t$ is selected if and only if it evaluates true on at least half of the positive examples%
\footnote{This was because we were interested in characterizing the positive category as in~\cite{hara:ishihata:aaai2018rulemodels}.}
as with previous studies~\cite{hara:ishihata:aaai2018rulemodels}. 
Consequently, we obtain a candidate space of size 
$M 
= |\sig T|^{\ell-1}|\sig Y|^{\ell} 
= 64^2 2^3 
= 32,798
$.
We first obtained a reference classifier $h_0$ by \alg{CORELS} on the training dataset $S$, and then computed the Rashomon set $\mathcal{R}_{\varepsilon} = \mathcal{R}_{\varepsilon}(h_0 \mid S)$ by $\alg{CorelsEnum}$.
We computed 
$\mathcal{R}_{\varepsilon}$ 
by varying the error tolerance $\varepsilon$ from $1\%$ to $15\%$, 
and analyzed 
its 
properties
for each $\varepsilon$. 

\subsection{Experimental Results}

\subsubsection{The Numbers of Good Rule Lists by Varying the Error Tolerance}

In \cref{fig:numberofmodels},  we show the number of the models in the Rashomon set by varying the error tolerance $\varepsilon$ from $1\%$ to $15\%$. In the figure, the reference model $h_0$ locates at $\varepsilon = 1\%$, which amounts to training error $ 34.8 \%$, while the baseline model $h_*$, which is such a constant rule list that always outputs $y=0$ for any input ${\vec x}$ locates at $\varepsilon = 10\%$, which amounts to the training error $44.8 \%$, i.e., the ratio of the examples with $y=0$ in the training dataset $S$.
From \cref{fig:numberofmodels}, we can see that the total number of models in $\mathcal{R}_{\varepsilon}(h_0 \mid S)$ increases rapidly between the error tolerance $\varepsilon$ of $9\%$ and $10\%$, and almost saturates after $\varepsilon$ exceeds $10\%$. 
For example, the Rashomon set with $\varepsilon=9\%$ (resp.\ $\varepsilon=10\%$) contained $5679$ (resp.\ $11446$) rule lists.  
This is because the Rashomon sets with $\varepsilon \geq 10 \%$ included exponentially many rule lists as accurate as the baseline model $h_*$ in the number of candidate terms in $\mathcal{T}$. 

\subsubsection{Comparison of the Existing and the Proposed Algorithms}
Next, we compared 
the existing method (\alg{CorelsLawler}) 
in Sec.~\ref{sec:algo:lawler:with:corels} and 
our proposed method (\alg{CorelsEnum}) in Sec.~\ref{sec:algo:proposed} in terms of running time and memory. 
We ran experiments for finding good rule lists in the objective function $R_{\lambda}$ with
parameter $\lambda = 0.015$ for both algorithms  within around 6,000 seconds. 

\cref{table:compare} shows the comparison of the running time and memory usage of both algorithms within 6,000 seconds. 
We see that without limit of the error tolerance $\varepsilon$, the proposed algorithm \alg{CorelsEnum} enumerated all 23354 models including all good models for any $\varepsilon \ge 0$, while the existing method was stopped at $K=40$ by timeout of $6,000$ seconds after finding top-40 good models. 
From these results, we observed that the proposed 
\alg{CorelsEnum} was about $5.7$ times faster than the existing 
\alg{CorelsLawler}. 
\cref{fig:compare} shows the objective function value against the rank of the models.  
For the top-40 models, we confirmed that both algorithms successfully found models with the same value of the objective function.

\subsubsection{Predictive Multiplicity}
\label{sec:exp:multiplicities}
Next, we examine the predictive multiplicity of the Rashomon set on the COMPAS dataset. 
\cref{fig:predictivemultiplicity} shows the results on 
the discrepancy $\delta_\eps = \delta_\eps(h_0 \mid S)$ and 
ambiguity $\alpha_{\eps} = \alpha_{\eps}(h_0 \mid S)$ of the Rashomon set $\mathcal{R}_{\eps} = \mathcal{R}_{\eps}(h_0 \mid S)$ on the training dataset $S$ for each $\varepsilon$. 
From \cref{fig:predictivemultiplicity}, we observed that the values of 
$\delta_{\eps}$ and $\alpha_{\eps}$
monotonically increased as $\eps$ increased. 
For example, the value of discrepancy (resp.\ ambiguity) with $\varepsilon = 1\%$ was $\delta_\eps = 11\%$ (resp.\ $\alpha_\eps = 29\%$). 
These results imply that $11\%$ of predictions can be changed by switching the reference classifier $h_0$ with a classifier 
$h \in \mathcal{R}_{\varepsilon}$ 
that is only $1\%$ less accurate, and that $29\%$ of individuals are assigned conflicting predictions by at least one classifier $h \in \mathcal{R}_{\eps}$ with the error tolerance $1\%$~\cite{Marx:ICML2020}. 
We also measured 
$\delta_{\eps}$ for all good rule lists $h$ in $\mathcal{R}_{\eps}$. 
\cref{fig:radius} shows the histogram of these values with $\varepsilon = 1\%$ on the training datasetg $S$ and test dataset $S'$. 
From \cref{fig:radius}, we can see that there were rule lists that achieved lower discrepancy than that of the reference classifier $h_0$. 
It suggests that we can obtain another reference classifier with lower discrepancy than $h_0$ by exhaustive search of the Rashomon set 
$\mathcal{R}_{\varepsilon}$. 

\subsubsection{Unfairness Range}

Finally, \cref{fig:unfairnessrange} shows the distribution of discrimination scores demographic parity ($\mathrm{DP}$) and equal opportunity ($\mathrm{EO}$) on rule lists 
in the Rashomon set. 
In the figure, we can clearly see the trade-off between the empirical risk $L(h \mid S')$ and the minimum discrimination scores by the lower ends of violin plots, which is consistent with existing theoretical results~\cite{Hardt:NIPS2016}.  For example, 
we have a higher discrimination value of  $\mathrm{DP}=0.10$ in the higher accuracy case with error tolerance $\eps = 1\%$, while we can have a lower and better value of $\mathrm{DP}=0.02$ in the lower accuracy case with error tolerance $\eps = 7\%$. After error tolerance $\eps \ge 10\%$ of the trivial, constant learner, we see that the distribution becomes stable, and most rule lists in the population hold the lowest $\mathrm{DP} = 0.02$. 
Furthermore, we can see that the rule lists are concentrated to a few clusters, in the violin plots for $\varepsilon$ from $1\%$ to $8\%$, indicating existence of a few subgroups of good rule lists that behave similarly in their syntax and predictions. 
\section{Conclusion}
In this paper, we studied efficient computation of all good models in the Rashomon set for the class of rule lists. 
By extending a state-of-the-art algorithm \textit{CORELS} for a globally optimal rule list, we proposed an exact algorithm  \alg{CorelsEnum} for enumerating all the rule lists in the Rashomon set. 
To evaluate the usefulness of \alg{CorelsEnum}, we conducted experiments on the COMPAS dataset, and analyzed the computed Rashomon set of the rule lists from the perspectives of predictive multiplicity and fairness. 

In future work, we plan to 
conduct experiments on other real datasets and with larger values of $\ell\ge 4$. It is also interesting to extend our algorithm to other rule models, such as decision trees of bounded size. 

\medskip
\textbf{Acknowledgement}. The authors would like to thank anonymous referees for their valuable comments that improves the quality of this paper. 
This work was partly supported grants from Grant-in-Aid for JSPS Research Fellow 20J20654, and Grant-in-Aid for Scientific Research(A) 20H0059. 



%
%
\bibliographystyle{splncs04}
\bibliography{mata,arim}
\end{document}